\documentclass[11pt]{article}
\usepackage[utf8]{inputenc}
\usepackage[T1]{fontenc}
\usepackage[margin=1in]{geometry}
\usepackage{amsmath,amssymb,amsthm}
\usepackage{graphicx}
\usepackage{booktabs}
\usepackage{array}
\usepackage{caption}
\usepackage[hidelinks]{hyperref}

\newtheorem{proposition}{Proposition}
\newtheorem{corollary}{Corollary}
\theoremstyle{definition}

\title{\textbf{Differential Refresh of Materialized Derived State}\\[2pt]
\large From a Single-Age Equivalence Limit to an Optimal Per-Partition Allocation,
with a Real-System Validation on Materialized Views}
\author{Amit Rajula\\
\small Independent Researcher, Phoenix, AZ, USA\\
\small ORCID: 0009-0006-8025-6867 \quad amit.rajula@gmail.com}
\date{}

\begin{document}
\maketitle

\begin{abstract}
\noindent
Materialized views, cached aggregates, and trained models are all derived
state: query results precomputed from base relations that begin to age the
instant they are built, because the data they summarize keeps changing. A common
way to keep such state fresh is to replace a fixed recompute cadence with an
\emph{adaptive} trigger---a weighted staleness score that recomputes when
accumulated base-data change crosses a threshold. We show this is the wrong
lever and identify the right one. First, an \emph{equivalence limit}: any refresh
trigger that is a static, strictly monotone function of a single shared age of a
globally-recomputed artifact is operationally equivalent to a calibrated periodic
timer, so a global weighted staleness budget cannot schedule a recompute that a
uniform timer could not. The escape is to enlarge the state: refresh partitions
(the groups of a \texttt{GROUP BY}, per-tenant views, or data segments)
\emph{differentially}, each on its own schedule. We solve the resulting
budget-allocation problem: the optimal per-partition recompute rate is
proportional to the square root of partition risk $\sqrt{w_j\lambda_j}$
(importance $\times$ base-change rate), and the policy never costs more than the
uniform timer, beating it by a closed-form Cauchy--Schwarz ``price of
uniformity'' that grows with heterogeneity. We validate three ways---analytic,
discrete-event simulation, and a \emph{real relational engine} (SQLite)
maintaining materialized \texttt{GROUP BY} views with real SQL and
wall-clock timing---where the policy lowers realized weighted staleness by
$14$--$15\%$ at matched recompute count, winning on every run, while a naive
per-partition exposure threshold does not reliably help. The design rule for any
periodically-recomputed derived state: a global staleness score is a timer in
disguise; the leverage is not a cleverer score but differential refresh
allocated by the square root of partition risk.

\medskip
\noindent\textbf{Keywords:} materialized view maintenance; derived-state
freshness; data currency; refresh scheduling; staleness; model retraining.
\end{abstract}

\section{Introduction}
\label{sec:intro}
A materialized view is derived state: a query result, aggregate, or model
precomputed from base relations and kept for fast reuse. The moment it is
built it begins to age, because the base data it summarizes keeps changing.
Keeping derived state usefully fresh---deciding \emph{when} to recompute
it---is a classical database problem for relational views and data-warehouse
aggregates~\cite{gupta1995,blakeley1986,zhuge1995}, and the same problem now
recurs one level up the stack for machine-learning (ML) models, which are
materialized views over a training snapshot that ages exactly as a cached
aggregate does~\cite{labrinidis2004,guo2004,sculley2015}. In both settings a
common response is to replace a fixed recompute cadence with an \emph{adaptive}
trigger: a weighted staleness score that recomputes when accumulated source
change crosses a threshold, in the hope of refreshing more when the base data
moves fast and less when it is calm.

This paper asks two linked questions about such triggers and answers both,
first in theory and then on a real relational engine. When can an adaptive
staleness trigger schedule recomputation any differently than a fixed clock?
And when it can, what is the \emph{optimal} refresh policy?

\paragraph{The limit.} Our first observation is negative and clarifying.
\emph{Any refresh trigger that is a static, strictly monotone function of a
single shared age of the derived state is operationally equivalent to a
calibrated periodic timer} (Proposition~\ref{prop:equiv}). Thresholding such a
score is thresholding age in different units: for every score threshold there
is an age at which it fires, so the two induce the same recompute schedule
(Figure~\ref{fig:equiv}). A weighted staleness budget over a single
globally-recomputed artifact---however elaborately it weights partitions,
sources, and sensitivities---is exactly such a function, and therefore cannot
schedule a recompute at a time a calibrated uniform timer could not.

\paragraph{The policy.} The limit also says precisely how to escape it. The
equivalence holds because a single global recompute gives every partition one
shared age; the trigger reacts to one scalar. The escape is to enlarge the
state: refresh partitions \emph{differentially}, so each carries its own age
and its own recompute interval. This is exactly \emph{partial} view
maintenance---recomputing some partitions (segments, sub-aggregates,
per-tenant views, or per-segment sub-models) more often than others---and it
is meaningful whenever recompute cost is separable across partitions. The
design question becomes a budget-allocation problem: given a fixed total
recompute rate, how often should each partition be refreshed? We solve it. The
optimal allocation has a clean closed form in the frequent-refresh regime:
each partition's recompute rate is proportional to the square root of its
\emph{risk} $w_j\lambda_j$, the product of its importance weight and its
base-change rate (Proposition~\ref{prop:alloc}, Figure~\ref{fig:alloc}). The
uniform timer is the special case that ignores this structure, and the gap
between them is a closed-form ``price of uniformity'': a Cauchy--Schwarz
dispersion of $\sqrt{w_j\lambda_j}$ that is zero exactly when partitions are
homogeneous and grows with heterogeneity (Proposition~\ref{prop:dominance}).

\paragraph{Validation on a real engine.} We validate the policy three ways,
and we are careful to separate what is proved from what is measured. Analytic
and discrete-event results confirm the allocation and quantify the win. More
importantly, we implement differential refresh on a \emph{real relational
engine} (SQLite~3.45; drop-in for PostgreSQL), where each partition is a
materialized aggregate (\texttt{GROUP BY}) view that is actually recomputed
with SQL as its base table receives real inserts over wall-clock time. On this
real system, with real recompute cost and real timing, the optimal
differential policy lowers realized weighted staleness by $14$--$15\%$ over the
uniform timer at matched recompute count, on every one of $16$ runs across two
heterogeneity regimes (Section~\ref{sec:realsys}). A naive per-partition
exposure threshold---the obvious way to ``go differential''---is erratic and
does not reliably help, so the contribution is the \emph{allocation}, not the
mere decision to refresh partitions separately.

\paragraph{Contributions.}
\begin{enumerate}
\item A derived-state formulation of refresh scheduling that treats a
materialized view, aggregate, or model uniformly as a partitioned artifact with
a per-partition staleness-exposure cost and a separable recompute budget
(Section~\ref{sec:model}).
\item An \emph{equivalence limit} (Proposition~\ref{prop:equiv}): any static
strictly monotone function of one shared age is a calibrated periodic timer,
with a global weighted staleness budget as a corollary (Section~\ref{sec:limit}).
\item An \emph{optimal differential-refresh policy}
(Proposition~\ref{prop:alloc}): a KKT characterization of the cost-minimizing
per-partition intervals under a recompute budget, with a closed-form
square-root allocation in the frequent regime; and a \emph{dominance result
with a closed-form gap} (Proposition~\ref{prop:dominance}) showing the policy
never costs more than the uniform timer and quantifying the saving as a
Cauchy--Schwarz dispersion of $\sqrt{w_j\lambda_j}$ (Section~\ref{sec:policy}).
\item A three-part evaluation---analytic, discrete-event simulation, and a
\emph{real-system} study on a relational engine with materialized
\texttt{GROUP BY} views---establishing a $14$--$15\%$ staleness reduction at
matched recompute count under real SQL execution and timing, and showing that a
naive exposure threshold does not suffice (Sections~\ref{sec:results}--\ref{sec:realsys}).
\end{enumerate}

The practical message is a design rule for any periodically-recomputed derived
state: a global staleness score is a timer in disguise; the leverage is not a
cleverer score but a better action---refresh partitions differentially, and
allocate the recompute budget by the square root of partition risk.

\section{Related Work}
\label{sec:related}
\paragraph{Materialized-view maintenance.}
Maintaining derived relational state as base relations change is a long-studied
database problem. Classical work develops incremental and deferred maintenance
that updates a view from base deltas rather than recomputing it
wholesale~\cite{blakeley1986,gupta1995,ceri1991,colby1996}, view maintenance in
data-warehouse settings where the view and its sources are
decoupled~\cite{zhuge1995}, and self-maintainability, which asks when a view can
be refreshed without consulting base data~\cite{quass1996}. Our setting is the
regime those techniques do not cover: the derived artifact---an aggregate kept
as a snapshot, or a trained model---is refreshed by \emph{full recompute} of a
partition rather than by delta application, so the open decision is scheduling,
not incrementalization. Textbook treatments of view maintenance and of refresh
in distributed and replicated databases frame this
background~\cite{gmbook,ozsuvalduriez,bernstein1987}.

\paragraph{Data currency, freshness, and refresh scheduling.}
A substantial database literature treats freshness as a schedulable quantity.
Labrinidis and Roussopoulos balance performance against data freshness in web
database servers by scheduling view updates~\cite{labrinidis2004}; Guo et al.
formalize relaxed currency and consistency so queries can state how stale an
answer may be~\cite{guo2004}; R\"ohm et al. route queries by freshness across
replicated OLAP components~\cite{rohm2002}; and Olston and Widom study
best-effort cache synchronization under a precision/freshness
tradeoff~\cite{olston2002}. Most directly, Cho and Garcia-Molina analyze how to
schedule refreshes of a local copy to maximize freshness against sources that
change at heterogeneous rates, and obtain the celebrated, counterintuitive
result that for a pure-freshness objective uniform refresh is often optimal and
one may even refresh slow-changing items \emph{more}~\cite{cho2000}. Our
objective is a weight- and sensitivity-weighted stale-exposure functional
rather than the fraction of time up to date, and under it the optimal
allocation moves in the intuitive direction (refresh high-risk partitions
more), with a clean square-root law. The two results are consistent and we
contrast them directly in Section~\ref{sec:discussion}: both turn on the
objective, and both identify homogeneity as the point where uniform is optimal.

\paragraph{Caching, TTL, and Age of Information.}
Time-to-live expiration and web-cache freshness policies are the operational
ancestors of a single-age staleness budget~\cite{cao1997}: a TTL is one age
threshold. The equivalence limit says a global weighted staleness score
collapses back to a (recalibrated) TTL. Age-of-Information provides a vocabulary
and optimization results for the age of information observed by a
consumer~\cite{yates2021}; our exposure functional is in that spirit, and our
question is when an age-derived control variable carries scheduling information
beyond age itself.

\paragraph{Model refresh as derived-state maintenance.}
Production ML platforms treat data quality and train/serve consistency as
first-class operational concerns~\cite{sculley2015,baylor2017,breck2019}, and
concept-drift research studies when predictive distributions
change~\cite{gama2014}. This paper is orthogonal to drift detection: it does not
ask \emph{when} the world changed but how to allocate a fixed recompute budget
across heterogeneous partitions of a derived artifact, whether that artifact is
a relational aggregate or a trained model.

\section{System Model}
\label{sec:model}
\subsection{Partitioned Derived State and Stale Exposure}
Let the derived artifact be partitioned into $n$ partitions $j=1,\dots,n$---for
a materialized aggregate, the groups of a \texttt{GROUP BY}; for a model, data
segments or per-segment sub-models. Partition $j$ has importance weight
$\pi_j \ge 0$, sensitivity $s_j \ge 0$, and a Poisson base-change rate
$\lambda_j \ge 0$ (the rate at which rows in its base data are inserted or
updated); write $w_j = \pi_j s_j$ for its combined weight. Interpreting
$1 - e^{-\lambda_j a}$ as the probability that partition $j$ has an
unincorporated base change at age $a$ since its last recompute---a standard
Poisson-exposure model---the long-run average weighted stale exposure of a
partition recomputed periodically with interval $T_j$ is
\begin{equation}
\label{eq:Ej}
E_j(T_j) \;=\; \frac{w_j}{T_j}\int_0^{T_j}\!\bigl(1-e^{-\lambda_j a}\bigr)\,da
\;=\; w_j\left[\,1 - \frac{1-e^{-\lambda_j T_j}}{\lambda_j T_j}\,\right].
\end{equation}
The system cost is the total weighted exposure $C = \sum_j E_j(T_j)$, reported
normalized by $\sum_j w_j$.

\subsection{Recompute Budget and the Separable-Cost Assumption}
\label{sec:cost}
Each recompute of a partition consumes resources; we charge one unit per
partition-recompute and constrain the total \emph{recompute rate}
\begin{equation}
\label{eq:budget}
R \;=\; \sum_{j=1}^n \frac{1}{T_j}
\quad\text{(partition-recomputes per unit time).}
\end{equation}
This budget assumes recompute cost is \emph{separable across partitions}: the
cost of refreshing a subset is the sum of the subset's costs. Partition-wise
materialized aggregates (recompute one group's view), per-tenant or per-region
views, and per-segment sub-models all satisfy it. A monolithic artifact
refreshed only by full global recompute does not: refreshing any part costs a
full recompute, and differential refresh offers no saving---which, by
Proposition~\ref{prop:equiv}, is exactly the single-age regime where a staleness
score is already a timer. The constructive results hold precisely when partial
recompute is meaningful.

\section{The Single-Age Equivalence Limit}
\label{sec:limit}
A single global recompute sets one common cutoff for every partition, so after
it all partitions share the same age $a(t)$, and any staleness score is a
function of that one scalar.

\begin{proposition}[Single-age equivalence]
\label{prop:equiv}
Let a refresh trigger fire when $G(a(t)) \ge \theta$, where $a(t)$ is a single
shared age and $G$ is static and strictly monotone increasing on the attainable
age interval. Then for every threshold $\theta$ in the attainable range of $G$
there is a unique age $\tau_\theta = G^{-1}(\theta)$ such that the trigger fires
if and only if $a(t) \ge \tau_\theta$. The trigger therefore induces the same
schedule as a calibrated uniform timer of period $\tau_\theta$.
\end{proposition}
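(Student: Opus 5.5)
The plan is to reduce the statement to the elementary fact that a strictly increasing function on an interval is a bijection onto its image. The resulting order-preserving inverse converts the score inequality into an age inequality. Let $I$ be the attainable age interval, typically $[0,a_{\max})$ or $[0,\infty)$, and let $G(I)$ be its image. Since $G$ is strictly increasing on $I$, it is injective there, so $G\colon I\to G(I)$ is a bijection. The inverse $G^{-1}\colon G(I)\to I$ is well defined and strictly increasing.

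For $\theta\in G(I)$, set $\tau_\theta=G^{-1}(\theta)$. Uniqueness follows from injectivity: if $G(\tau)=G(\tau')=\theta$, then $\tau=\tau'$. The key step is the equivalence $G(a)\ge\theta \iff a\ge\tau_\theta$ for all $a\in I$. I would prove it by two contrapositive monotonicity arguments:
\begin{itemize}
\item if $a\ge\tau_\theta$, then $G(a)\ge G(\tau_\theta)=\theta$, because $G$ is increasing;
\item if $a<\tau_\theta$, then $G(a)<G(\tau_\theta)=\theta$, because $G$ is strictly increasing.
\end{itemize}
Since these cases exhaust $I$, the trigger condition holds at time $t$ exactly when $a(t)\ge\tau_\theta$.

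The remaining step is to pass from the pointwise equivalence to equality of schedules. Under either rule the age $a(t)$ resets to $0$ at each recompute and then grows at unit rate. Starting from a common recompute epoch, I would argue inductively over recompute events that both rules fire at the first time $a(t)$ reaches $\tau_\theta$. Each such firing comes exactly $\tau_\theta$ after the previous reset. The two fire sequences therefore coincide, and each is periodic with period $\tau_\theta$, which is the calibrated uniform timer.

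I expect the main obstacle to be this last bookkeeping step rather than the order-theoretic core. The first crossing time exists because $a(t)$ is continuous between resets and $\tau_\theta\in I$ is attainable, so the fire time is $\inf\{t: a(t)\ge\tau_\theta\}$ and the infimum is attained. It is also worth noting that $\theta$ need not be in the range of $G$ when $G$ is not continuous. In that case one would replace $G^{-1}(\theta)$ by $\inf\{a\in I: G(a)\ge\theta\}$, which yields the same argument. The statement sidesteps this by restricting $\theta$ to the attainable range.
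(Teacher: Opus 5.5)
Your proof is correct and follows essentially the paper's route: you invert the strictly increasing $G$ on its range to turn $G(a)\ge\theta$ into $a\ge\tau_\theta$, then use the fact that the shared age resets to $0$ at each global recompute and grows at unit rate, so the first crossing recurs with period $\tau_\theta$. You make explicit what the paper leaves implicit (the two-case check, injectivity for uniqueness, attainment of the first crossing); only your closing aside is loose, because for $\theta\notin G(I)$ the infimum $\inf\{a\in I: G(a)\ge\theta\}$ need not be attained, so the firing set can be $\{a>\tau\}$ with no first firing time, though this lies outside the statement.
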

\begin{proof}
$G$ strictly increasing admits a well-defined inverse on its attainable range,
and $G(a) \ge \theta \iff a \ge G^{-1}(\theta)$. Under global recompute $a(t)$
resets to $0$ at each refresh and otherwise increases at unit rate, so the first
crossing of $a \ge \tau_\theta$ recurs with period $\tau_\theta$.
\end{proof}

\begin{corollary}[A global staleness budget is a timer]
\label{cor:wse}
The normalized weighted stale exposure $F(a) = \sum_j w_j(1-e^{-\lambda_j a})/\sum_j w_j$
evaluated at a shared age is static and strictly monotone in $a$, since
$F'(a) = \sum_j w_j\lambda_j e^{-\lambda_j a}/\sum_j w_j > 0$ whenever some
$w_j\lambda_j > 0$. By Proposition~\ref{prop:equiv}, thresholding it at $\beta$
is a uniform timer of period $\tau_\beta = F^{-1}(\beta)$.
\end{corollary}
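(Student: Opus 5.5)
The plan is to verify the two hypotheses of Proposition~\ref{prop:equiv} for $G = F$ and then read off the conclusion.

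\emph{Static.} $F$ depends only on the shared age $a$ and on the fixed parameters $w_j,\lambda_j$. Since these do not change with $t$, $F$ is a static function of $a(t)$. Under global recompute every partition resets together, so each partition's age equals $a(t)$. The score the operator thresholds is therefore exactly $F(a(t))$.

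\emph{Strict monotonicity.} Each summand $w_j(1-e^{-\lambda_j a})$ is differentiable, so I differentiate term by term:
\[
F'(a)=\sum_j w_j\lambda_j e^{-\lambda_j a}\Big/\sum_j w_j .
\]
Every term is nonnegative because $w_j,\lambda_j\ge 0$. If some $w_k\lambda_k>0$, that term is strictly positive for every finite $a$, since $e^{-\lambda_k a}>0$. Hence $F'>0$ on $[0,\infty)$, and $F$ is strictly increasing there by the mean value theorem. The normalization requires $\sum_j w_j>0$, which is implied by the assumption $w_k\lambda_k>0$.

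\emph{Attainable range.} $F$ is continuous, with $F(0)=0$ and
\[
\lim_{a\to\infty}F(a)=\sum_{j:\lambda_j>0}w_j\Big/\sum_j w_j=:F_\infty .
\]
So $F$ maps $[0,\infty)$ bijectively onto $[0,F_\infty)$. For every $\beta$ in this interval, the inverse $\tau_\beta=F^{-1}(\beta)$ exists and is unique.

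\emph{Conclusion.} Applying Proposition~\ref{prop:equiv} with $G=F$ and $\theta=\beta$ shows that the trigger fires exactly when $a\ge\tau_\beta$. Since $a$ resets to $0$ at each refresh, the induced schedule is periodic with period $\tau_\beta$.

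The main obstacle is the edge of the range. If $\beta\ge F_\infty$, the threshold is never reached and the trigger never fires; this is a degenerate "infinite-period timer" and should be excluded or noted explicitly. Partitions with $\lambda_j=0$ or $w_j=0$ contribute nothing to $F'$. They are harmless as long as at least one partition has $w_j\lambda_j>0$, which is precisely the corollary's hypothesis. If every $w_j\lambda_j=0$, then $F$ is constant and Proposition~\ref{prop:equiv} does not apply.
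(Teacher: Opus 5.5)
Your proposal is correct and follows essentially the same route as the paper: you check that $F$ is a fixed function of the shared age, show $F'(a)>0$ from any term with $w_k\lambda_k>0$, and then invoke Proposition~\ref{prop:equiv} with $G=F$. Your explicit identification of the attainable range $[0,F_\infty)$ with $F_\infty=\sum_{j:\lambda_j>0}w_j/\sum_j w_j$, together with your note that $\beta\ge F_\infty$ never fires, is a useful precision that the paper leaves implicit in the phrase ``attainable range.''
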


Figure~\ref{fig:equiv} is the geometric content: the strictly monotone $F$
carries each budget $\beta$ to a unique age $\tau_\beta$, so a horizontal
threshold and a vertical age cut are the same line on two axes. To beat the
timer one must leave the single-age regime---which means refreshing partitions
differentially.

\begin{figure}[t]
\centering
\includegraphics[width=0.62\linewidth]{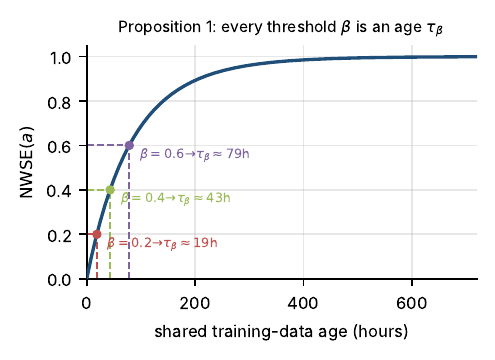}
\caption{The equivalence limit (Proposition~\ref{prop:equiv}). $\mathrm{NWSE}(a)$
for a heterogeneous six-partition set is strictly monotone in the shared age, so
each budget $\beta$ maps through $F^{-1}$ to a unique age $\tau_\beta$:
thresholding the score is thresholding age in different units.}
\label{fig:equiv}
\end{figure}

\section{Optimal Differential Refresh}
\label{sec:policy}
\subsection{The Allocation Problem}
Allow each partition its own interval $T_j$ (rate $x_j = 1/T_j$). The design
problem is to minimize total weighted exposure subject to the recompute budget:
\begin{equation}
\label{eq:opt}
\min_{T_1,\dots,T_n > 0}\ \sum_{j} E_j(T_j)
\qquad\text{s.t.}\qquad \sum_j \frac{1}{T_j} = R .
\end{equation}
The uniform timer is the feasible point $T_j = n/R$; it is the policy the limit
collapses to.

\begin{proposition}[Optimal differential allocation]
\label{prop:alloc}
Problem~\eqref{eq:opt} has optimal intervals $T_j^\star = u_j^\star/\lambda_j$,
where each $u_j^\star > 0$ solves the stationarity condition
\begin{equation}
\label{eq:kkt}
\frac{w_j}{\lambda_j}\bigl(1 - e^{-u_j}(1+u_j)\bigr) \;=\; \mu
\qquad (j = 1,\dots,n),
\end{equation}
with a single multiplier $\mu > 0$ fixed by $\sum_j \lambda_j/u_j^\star = R$. In
the frequent-refresh regime $u_j \to 0$ this reduces to the closed form
\begin{equation}
\label{eq:sqrt}
x_j^\star = \frac{1}{T_j^\star} = R\,\frac{\sqrt{w_j\lambda_j}}{\sum_k \sqrt{w_k\lambda_k}},
\qquad\text{i.e.}\quad x_j^\star \propto \sqrt{w_j\lambda_j}.
\end{equation}
\end{proposition}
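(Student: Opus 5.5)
Change variables to rates $x_j = 1/T_j$ and write $u_j = \lambda_j T_j = \lambda_j/x_j$. Then
\[
E_j = w_j\,\phi(u_j), \qquad \phi(u) = 1 - \frac{1-e^{-u}}{u},
\]
and the budget becomes the linear constraint $\sum_j x_j = R$.

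\emph{Step 1: convexity.} Set $g_j(x) = w_j\,\phi(\lambda_j/x)$ on $x>0$. A direct differentiation gives
\[
g_j'(x) = -\frac{w_j}{\lambda_j}\bigl(1-e^{-u}(1+u)\bigr) < 0,
\]
using $\frac{d}{du}\bigl(1 - \frac{1-e^{-u}}{u}\bigr) = \frac{1-e^{-u}(1+u)}{u^2}$ and $du/dx = -u^2/\lambda_j$. Since $h(u) = 1-e^{-u}(1+u)$ is increasing in $u$ ($h'(u) = u e^{-u} > 0$), and $u$ decreases in $x$, the derivative $g_j'$ increases in $x$. Hence each $g_j$ is strictly convex and decreasing. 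The problem is therefore a separable convex program over a simplex, and KKT conditions are necessary and sufficient. Slater's condition holds at the uniform point.

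\emph{Step 2: stationarity.} Form the Lagrangian $\sum_j g_j(x_j) + \mu\bigl(\sum_j x_j - R\bigr)$. Setting $g_j'(x_j) = -\mu$ gives exactly \eqref{eq:kkt}, and $\mu>0$ because $g_j' < 0$. Existence and uniqueness of the solution come from the monotonicity of $h$. We have $h(0^+)=0$ and $h(\infty)=1$, so for each $\mu \in \bigl(0, \min_j w_j/\lambda_j\bigr)$ there is a unique $u_j(\mu)$, and it is increasing in $\mu$. Consequently $S(\mu) = \sum_j \lambda_j/u_j(\mu)$ is continuous and strictly decreasing, with $S \to \infty$ as $\mu \to 0$. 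So any $R$ large enough is attained by exactly one $\mu$.

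This range condition is the main obstacle. If $R$ is small, some partition may want $u_j = \infty$, i.e.\ no refresh, and the constraint $T_j < \infty$ fails. I would handle this by restricting to partitions with $w_j\lambda_j > 0$ and assuming $R$ is large enough that $\mu$ lies in the admissible range. Alternatively, I would note the boundary solution $x_j = 0$ for partitions with $w_j/\lambda_j \le \mu$. The statement's phrase "each $u_j^\star > 0$ solves" suggests that interior case.

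\emph{Step 3: frequent regime.} Expand $h(u) = u^2/2 + O(u^3)$. Condition \eqref{eq:kkt} then reads $\frac{w_j}{\lambda_j}\cdot\frac{u_j^2}{2} \approx \mu$, so $u_j \approx \sqrt{2\mu\lambda_j/w_j}$. This gives
\[
x_j = \frac{\lambda_j}{u_j} \approx \frac{\sqrt{w_j\lambda_j}}{\sqrt{2\mu}}.
\]
Normalizing by $\sum_j x_j = R$ eliminates $\mu$ and yields \eqref{eq:sqrt}. The regime $u_j \to 0$ corresponds to $R \to \infty$, equivalently $\mu \to 0$.

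To make the limit rigorous, I would show that $x_j^\star(R)/R \to \sqrt{w_j\lambda_j}/\sum_k\sqrt{w_k\lambda_k}$. The argument uses $h(u)/(u^2/2) \to 1$ uniformly as $\mu \to 0$. A cross-check is the quadratic approximation $E_j \approx w_j\lambda_j/(2x_j)$, whose minimization under $\sum_j x_j = R$ is the classical Cauchy--Schwarz problem with the same square-root solution.
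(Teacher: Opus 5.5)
Your proposal is correct. Its core computation is the paper's: Lagrangian stationarity gives \eqref{eq:kkt}, monotonicity of $h(u)=1-e^{-u}(1+u)$ makes $u_j$ unique for a given $\mu$, and $h(u)=u^2/2+O(u^3)$ gives the square-root law. What differs is the argument around that computation. The paper differentiates the Lagrangian directly in the intervals $T_j$, which yields only a necessary first-order condition. It never shows that the stationary point is a minimum, nor that a multiplier meeting the budget exists.

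Your switch to rates $x_j$ is what makes a sufficiency argument possible. The constraint becomes linear and each $g_j$ is strictly convex, so KKT delivers global optimality and uniqueness. The program is not convex in the $T_j$ variables: $E_j$ is in fact concave in $T_j$, since $1-\frac{1-e^{-u}}{u}=\int_0^1(1-e^{-us})\,ds$. So the reparametrization is essential, not cosmetic.

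Your range analysis also exposes a caveat the paper's statement omits. Because $h<1$, an interior solution requires $\mu<\min_j w_j/\lambda_j$, so the stated form holds only for budgets $R$ above a threshold. For smaller $R$, the partitions with the smallest $w_j/\lambda_j$ optimally receive $x_j=0$, and the infimum is not attained at finite $T_j$.

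Finally, you identify the frequent regime with $\mu\to 0$, equivalently $R\to\infty$ by strict monotonicity of $S$. That turns the paper's formal Taylor substitution into a genuine limit statement for $x_j^\star(R)/R$.
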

\begin{proof}
The Lagrangian of~\eqref{eq:opt} is $\sum_j E_j(T_j) + \mu(\sum_j 1/T_j - R)$.
Setting $\partial/\partial T_j = 0$ gives $E_j'(T_j) = \mu/T_j^2$. With
$u=\lambda_j T_j$, differentiating \eqref{eq:Ej} gives
$E_j'(T_j) = w_j\lambda_j(1 - e^{-u}(1+u))/u^2 > 0$, so
$T_j^2 E_j'(T_j) = (w_j/\lambda_j)(1-e^{-u_j}(1+u_j)) = \mu$, which is
\eqref{eq:kkt}; the left side is increasing in $u_j$, so $u_j^\star$ is unique
given $\mu$, and $\mu$ is set by the budget. As $u\to 0$,
$1 - e^{-u}(1+u) = u^2/2 + O(u^3)$, so \eqref{eq:kkt} becomes
$w_j\lambda_j T_j^2/2 = \mu$, giving $T_j^\star = \sqrt{2\mu/(w_j\lambda_j)}$ and
$x_j^\star \propto \sqrt{w_j\lambda_j}$; normalizing to the budget yields
\eqref{eq:sqrt}.
\end{proof}

A partition's recompute rate should scale with the square root of its risk
$w_j\lambda_j$, not with the risk itself. Figure~\ref{fig:alloc} confirms the
exact numerical optimum follows \eqref{eq:sqrt} tightly in the frequent regime
(log--log slope $1/2$, correlation $0.998$). Outside it the closed form degrades
and the numerical solution of~\eqref{eq:kkt} should be used
(Section~\ref{sec:results}).

\begin{figure}[t]
\centering
\includegraphics[width=0.62\linewidth]{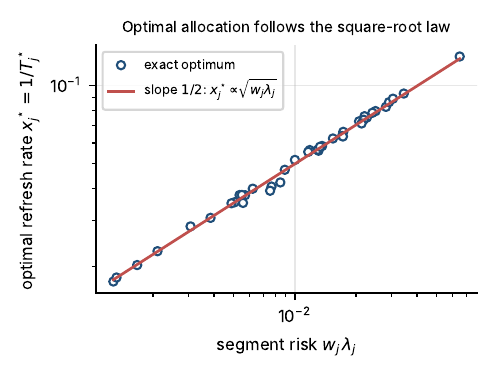}
\caption{The optimal allocation (Proposition~\ref{prop:alloc}). In the
frequent-refresh regime the exact numerical optimum obeys the closed-form
square-root law $x_j^\star \propto \sqrt{w_j\lambda_j}$ (log--log slope $1/2$);
40-partition moderate-heterogeneity profile.}
\label{fig:alloc}
\end{figure}

\subsection{Dominance and the Price of Uniformity}
\begin{proposition}[Dominance with a closed-form gap]
\label{prop:dominance}
The optimal differential policy never costs more than the uniform timer at the
same budget. In the frequent-refresh regime the relative saving is
\begin{equation}
\label{eq:gap}
1 - \frac{C^\star}{C_{\mathrm{unif}}}
\;=\; 1 - \frac{\bigl(\sum_j \sqrt{w_j\lambda_j}\bigr)^2}{n\,\sum_j w_j\lambda_j},
\end{equation}
which lies in $[0,1)$, equals $0$ iff $w_j\lambda_j$ is the same for all $j$, and
increases with the dispersion of $\sqrt{w_j\lambda_j}$.
\end{proposition}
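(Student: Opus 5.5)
The argument has two independent parts. The exact claim (``never costs more'') follows from feasibility. The closed-form gap is a second-order expansion followed by Cauchy--Schwarz.

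\emph{Dominance.} The uniform timer $T_j = n/R$ satisfies $\sum_j 1/T_j = R$, so it is a feasible point of~\eqref{eq:opt}. The optimum $C^\star$ is the infimum of the objective over the feasible set, so $C^\star \le C_{\mathrm{unif}}$ immediately. No regime assumption is needed here. For the optimum to be attained rather than merely approached, I will note the boundary behaviour. Each $E_j$ is continuous and increasing in $T_j$, and $E_j \to 0$ as $T_j \to 0$. If some $T_j\to\infty$, the budget pushes mass onto the other partitions while $E_j\to w_j$. A minimizer of the separable convex program in the rates $x_j$ therefore exists, and it is the KKT point of Proposition~\ref{prop:alloc}. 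I only need convexity of $E_j$ as a function of $x_j=1/T_j$, which I will check by differentiating. Even without attainment, the inequality holds for the infimum.

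\emph{Gap in the frequent regime.} Expanding~\eqref{eq:Ej} for small $u=\lambda_j T_j$ gives $1-(1-e^{-u})/u = u/2 + O(u^2)$, so $E_j \approx w_j\lambda_j T_j/2 = w_j\lambda_j/(2x_j)$. For the uniform timer this yields $C_{\mathrm{unif}} \approx \frac{n}{2R}\sum_j w_j\lambda_j$. Substituting the square-root allocation~\eqref{eq:sqrt} gives
\[
C^\star \approx \sum_j \frac{w_j\lambda_j \sum_k\sqrt{w_k\lambda_k}}{2R\sqrt{w_j\lambda_j}} = \frac{1}{2R}\Bigl(\sum_j\sqrt{w_j\lambda_j}\Bigr)^2 .
\]
The ratio of the two is exactly the quotient in~\eqref{eq:gap}, so~\eqref{eq:gap} follows.

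\emph{Properties of the gap.} Set $v_j=\sqrt{w_j\lambda_j}$. Cauchy--Schwarz with the all-ones vector gives $(\sum_j v_j)^2 \le n\sum_j v_j^2$, with equality iff all $v_j$ are equal. This shows the saving is $\ge 0$, and that it equals $0$ iff $w_j\lambda_j$ is constant. The saving is $<1$ because $(\sum_j v_j)^2 \ge \sum_j v_j^2 > 0$ whenever some $v_j>0$, so the ratio is at least $1/n>0$. For monotonicity in dispersion I will use the identity $n\sum_j v_j^2 - (\sum_j v_j)^2 = n^2\,\mathrm{Var}(v)$, with $\mathrm{Var}$ the empirical variance. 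This gives
\[
1-\frac{C^\star}{C_{\mathrm{unif}}} = \frac{\mathrm{Var}(v)}{\bar v^2+\mathrm{Var}(v)} = \frac{\mathrm{CV}^2}{1+\mathrm{CV}^2},
\]
where $\mathrm{CV}$ is the coefficient of variation of $v$. The saving is therefore increasing in $\mathrm{CV}$. I will state this as the precise meaning of ``increases with the dispersion''.

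\emph{Main obstacle.} The delicate step is justifying the leading-order approximation. The limit $u_j\to0$ must hold uniformly over all $j$, under both policies. This requires $R$ large relative to every $\lambda_j$ and to the allocation skew, so that each $x_j\gg\lambda_j$. I will phrase~\eqref{eq:gap} as the limit of $1-C^\star/C_{\mathrm{unif}}$ as $R\to\infty$ with fixed $(w_j,\lambda_j)$. To handle $C^\star$, I will bound it between the first-order quantity and the same quantity plus $O(1/R^2)$. Because both costs scale as $1/R$, the ratio converges to the stated expression.
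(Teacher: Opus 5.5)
Your main line is the paper's proof. Feasibility of $T_j=n/R$ gives $C^\star\le C_{\mathrm{unif}}$. The expansion $E_j\approx w_j\lambda_jT_j/2$, with \eqref{eq:sqrt} substituted, gives the two leading-order costs. Cauchy--Schwarz then gives nonnegativity and the equality case. You also supply two things the paper only asserts. The bound $(\sum_j v_j)^2\ge\sum_j v_j^2$ shows the saving is at most $1-1/n<1$. Writing the saving as $\mathrm{CV}^2/(1+\mathrm{CV}^2)$ gives ``increases with dispersion'' a precise meaning.

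The step that would fail as written is the sandwich in your last paragraph. The paper stops at the leading-order approximation, so this is a flaw in the extra rigor you add, not in the shared argument. Since $e^{-u}\le 1-u+u^2/2$ for $u\ge 0$, one has $1-(1-e^{-u})/u\le u/2$, i.e.\ $E_j(T_j)\le w_j\lambda_jT_j/2$ for every $T_j$. So the first-order cost over-estimates the true cost. Hence $C^\star$ is at most the true cost of the square-root allocation, which is at most $(\sum_j v_j)^2/(2R)$. The inequality $C^\star\ge(\sum_j v_j)^2/(2R)$ that your sandwich needs is false at every finite $R$.

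The upper half is therefore free. The half that needs an argument is a lower bound valid over \emph{all} feasible allocations; you cannot presuppose that the true minimizer has every $x_j\gg\lambda_j$. One way to get it is the inequality $1-(1-e^{-u})/u\ge u/(2+u)$ for $u\ge0$. It is equivalent to $(2+u)e^{-u}\ge 2-u$, which is trivial for $u\ge 2$ and is $\ln\frac{2+u}{2-u}\ge u$ on $[0,2)$. It gives $E_j\ge w_j\lambda_j/(2x_j+\lambda_j)$. Cauchy--Schwarz with $\sum_j(2x_j+\lambda_j)=2R+\sum_j\lambda_j$ then yields
\[
\frac{\bigl(\sum_j v_j\bigr)^2}{2R+\sum_j\lambda_j}\;\le\;C^\star\;\le\;\frac{\bigl(\sum_j v_j\bigr)^2}{2R}.
\]
The same pointwise bounds on $E_j$, evaluated at $x_j=R/n$, give $C_{\mathrm{unif}}=\frac{n}{2R}\sum_j w_j\lambda_j\,(1+O(1/R))$. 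The ratio then converges to the quotient in \eqref{eq:gap} as $R\to\infty$.

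A smaller point: your claim that the minimizer ``is the KKT point'' of Proposition~\ref{prop:alloc} can fail outside the frequent regime. The marginal value of rate at $x_j=0$ is the finite number $w_j/\lambda_j$. Whenever $\mu\ge w_j/\lambda_j$, the optimum sends $x_j\to0$ and is not attained with $T_j<\infty$. Your fallback to the infimum already covers this, so dominance is unaffected.
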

\begin{proof}
Optimality gives $C^\star \le C_{\mathrm{unif}}$ since the uniform allocation is
feasible. In the frequent regime $E_j(T_j) \approx w_j\lambda_j T_j/2$, so with
$a_j := w_j\lambda_j$ the uniform cost is $C_{\mathrm{unif}} = (n/2R)\sum_j a_j$
and, substituting \eqref{eq:sqrt}, $C^\star = (1/2R)(\sum_j \sqrt{a_j})^2$. Their
ratio $(\sum_j\sqrt{a_j})^2/(n\sum_j a_j) \le 1$ by Cauchy--Schwarz, with
equality iff all $a_j$ are equal.
\end{proof}

Equation~\eqref{eq:gap} is a ``price of uniformity'': the fractional cost a
uniform timer pays for ignoring partition risk is a normalized dispersion of
$\sqrt{w_j\lambda_j}$, zero for homogeneous workloads (recovering the
equivalence limit) and growing with heterogeneity.

\subsection{An Implementable Policy, and a Caution}
\label{sec:impl}
Proposition~\ref{prop:alloc} yields a directly implementable policy: estimate
$w_j,\lambda_j$, solve~\eqref{eq:kkt} (or use \eqref{eq:sqrt}) for the target
intervals $T_j^\star$, and recompute each partition on its own timer. A tempting
simpler rule---recompute partition $j$ when its exposure
$w_j(1-e^{-\lambda_j a_j})$ crosses a common threshold---needs no allocation
step but is \emph{not} optimal: its intervals scale as $1/(w_j\lambda_j)$ rather
than $1/\sqrt{w_j\lambda_j}$, over-concentrating recomputes on high-risk
partitions. Sections~\ref{sec:results}--\ref{sec:realsys} show this naive policy
is erratic and does not reliably beat the uniform timer, so the
allocation---not merely going differential---is the contribution.

\section{Analytic and Simulation Evaluation}
\label{sec:results}
\paragraph{Setup.} We evaluate on synthetic partitioned workloads with $n=8$
partitions and four heterogeneity profiles (\emph{low}, \emph{moderate},
\emph{high}, \emph{extreme}) controlling the spread of $\lambda_j$ and $w_j$.
Analytic results solve~\eqref{eq:opt} numerically (SLSQP) and in closed
form~\eqref{eq:sqrt} over $50$ random profiles per cell, with the uniform timer
as baseline at the identical budget. The discrete-event simulation runs a
$4320$-hour horizon on a $6$-hour grid, drawing actual Poisson change events;
policies are compared on the same event stream at matched total recompute count,
$50$ seeds. The real-system study is deferred to Section~\ref{sec:realsys}.

\subsection{The Analytic Win Grows with Heterogeneity}
Table~\ref{tab:win} reports the relative saving of the exact optimal allocation
over the uniform timer. It is essentially zero for near-homogeneous workloads,
where Proposition~\ref{prop:dominance} predicts no gain, and rises monotonically
with heterogeneity and refresh frequency, reaching $29.5\%$ for extreme
heterogeneity. The optimizer never produced a point worse than uniform across
all $600$ profiles, consistent with the dominance guarantee.

\begin{table}[t]
\centering
\caption{Analytic cost saving of optimal differential refresh over the uniform
timer (mean over 50 profiles, \%).}
\label{tab:win}
\begin{tabular}{lccc}
\toprule
Heterogeneity & Sparse & Moderate & Frequent \\
\midrule
Low      & 0.2 & 0.2 & 0.3 \\
Moderate & 7.5 & 7.1 & 8.4 \\
High     & 11.4 & 12.9 & 14.0 \\
Extreme  & 21.0 & 24.1 & 29.5 \\
\bottomrule
\end{tabular}
\end{table}

\subsection{The Closed Form Is Exact in the Frequent Regime}
The square-root law~\eqref{eq:sqrt} is an asymptote. In the frequent regime it
is highly accurate: its cost exceeds the exact optimum by at most $0.35\%$ at
moderate heterogeneity, and optimal and closed-form rates correlate at $0.98$
(log scale). In the sparse regime it degrades (cost excess up to $22\%$ at
extreme heterogeneity, where partitions saturate, $\lambda_j T_j \gg 1$), so the
numerical solution of~\eqref{eq:kkt} should be used there. This is the expected
boundary of a frequent-regime closed form.

\subsection{The Policy Wins in Simulation at Matched Budget}
Table~\ref{tab:sim} and Figure~\ref{fig:sim} give the simulation result. The
optimal differential policy lowers realized weighted stale exposure by
$7.7$--$28.8\%$ over the uniform timer at matched recompute count, winning on
$86$--$100\%$ of seeds. The naive exposure-threshold policy hovers near
break-even and is frequently worse than uniform (down to $-8.7\%$): going
differential helps only when the budget is allocated by the square-root law.

\begin{table}[t]
\centering
\caption{Realized stale-exposure saving over the uniform timer in the
discrete-event simulation, at matched recompute count (50 seeds). $K$ is the
uniform timer's recompute count; ``Opt.'' is optimal differential; ``Naive'' is
the exposure threshold; ``seeds'' is the fraction the optimal policy wins.}
\label{tab:sim}
\begin{tabular}{llccc}
\toprule
Heterogeneity & $K$ & Opt.\ (\%) & Naive (\%) & Opt.\ seeds (\%) \\
\midrule
Moderate & 10 & \phantom{0}8.6 & \phantom{$-$}0.2 & \phantom{0}98 \\
Moderate & 20 & \phantom{0}7.7 & $-4.3$ & \phantom{0}86 \\
Moderate & 40 & \phantom{0}8.6 & $-8.1$ & \phantom{0}94 \\
High     & 10 & 11.0 & \phantom{$-$}4.1 & 100 \\
High     & 20 & 12.7 & \phantom{$-$}3.0 & 100 \\
High     & 40 & 16.0 & \phantom{$-$}2.5 & 100 \\
Extreme  & 10 & 15.9 & $-5.9$ & \phantom{0}96 \\
Extreme  & 20 & 19.6 & $-8.7$ & \phantom{0}98 \\
Extreme  & 40 & 28.8 & $-6.4$ & \phantom{0}98 \\
\bottomrule
\end{tabular}
\end{table}

\begin{figure}[t]
\centering
\includegraphics[width=0.66\linewidth]{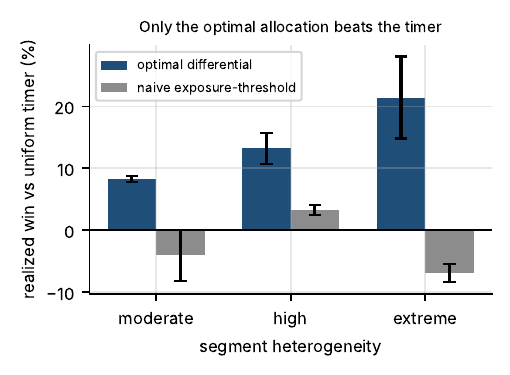}
\caption{Simulated saving over the uniform timer at matched recompute count
(mean over the three budgets of Table~\ref{tab:sim}, $\pm$ s.d.). Optimal
differential wins and grows with heterogeneity; the naive exposure threshold is
near break-even or worse.}
\label{fig:sim}
\end{figure}

\subsection{Robust to Rate-Estimation Noise}
Table~\ref{tab:noise} recomputes the allocation from multiplicative log-normal
noisy $\lambda_j$ while charging cost against the true rates. The win is stable:
even at $50\%$ noise it retains $5.5\%$ of $7.0\%$ (moderate), $11.3\%$ of
$13.0\%$ (high), and $23.2\%$ of $24.3\%$ (extreme). Because the allocation
depends on $\sqrt{\lambda_j}$, rate errors are damped.

\begin{table}[t]
\centering
\caption{Analytic win (\%) under multiplicative $\lambda$-estimation noise at the
moderate budget (50 profiles, mean).}
\label{tab:noise}
\begin{tabular}{lcccc}
\toprule
Heterogeneity & noise 0.00 & 0.10 & 0.25 & 0.50 \\
\midrule
Moderate & 7.0 & 7.0 & 6.6 & 5.5 \\
High     & 13.0 & 12.9 & 12.6 & 11.3 \\
Extreme  & 24.3 & 24.3 & 24.0 & 23.2 \\
\bottomrule
\end{tabular}
\end{table}

\section{Real-System Evaluation on Materialized Views}
\label{sec:realsys}
Analytic and discrete-event results idealize timing and recompute cost. To test
the policy where scheduling jitter, real SQL execution, and I/O replace the
idealized clock, we implement it on a real relational engine.

\paragraph{System.} Each partition is a materialized aggregate
(\texttt{GROUP BY}) view over a base table in SQLite~3.45. A source applies real
\texttt{INSERT}s over wall-clock time at a per-partition Poisson rate $\lambda_j$
(base data changing). A scheduler keeps each view fresh by actually recomputing
it with SQL (\texttt{DELETE} + \texttt{INSERT INTO mv\_j SELECT \ldots GROUP BY})
and writing it back; this is derived-state maintenance on a real engine, not a
discrete-event abstraction---real SQL execution, real I/O, real timing. We
compare three policies at a matched number of equally-charged view recomputes:
\textsc{uniform} (every view on one global period---the collapsed single-age
timer of Proposition~\ref{prop:equiv}), \textsc{differential} (each view on its
own $T_j^\star$ from the square-root allocation), and \textsc{naive} (the
per-view exposure threshold). The realized cost is time-integrated weighted
staleness, $\sum_j w_j$ times the fraction of wall-clock time view $j$ holds
$\ge 1$ unincorporated row---the real-system realization of $E_j$. We report
$n=8$ partitions, $8$ seeds per heterogeneity profile, in the frequent-refresh
regime where the square-root law applies (environment: SQLite~3.45.1,
Python~3.13, Linux x86-64).

\paragraph{Results.} Table~\ref{tab:realsys} and Figure~\ref{fig:realsql} give
the result. On the real engine the optimal differential policy lowers realized
weighted staleness by $14.2\%$ (moderate) and $15.0\%$ (high heterogeneity),
\emph{winning on all $8$ seeds in both regimes}, at matched recompute count. The
naive exposure threshold is erratic---mean $-1.4\%$ (moderate) and $+0.2\%$
(high), with large variance---and the optimal policy beats it on $7$ of $8$
seeds in each regime. The simulation's central finding therefore survives
contact with a real relational engine: differential refresh helps, but only
when the budget is allocated by the square-root law.

\begin{table}[t]
\centering
\caption{Real-system validation on SQLite materialized \texttt{GROUP BY} views
(8 seeds/profile, matched recompute count). ``Diff.''\ and ``Naive''\ are the
staleness reduction vs.\ the uniform timer; ``wins''\ counts seeds the policy
beats uniform; recompute time is total SQL recompute wall-time per run.}
\label{tab:realsys}
\begin{tabular}{lccccc}
\toprule
Heterogeneity & Diff.\ (\%) & Diff.\ wins & Naive (\%) & Uniform (ms) & Diff.\ (ms) \\
\midrule
Moderate & $14.2 \pm 5.8$ & 8/8 & $-1.4 \pm 16.4$ & 343 & 589 \\
High     & $15.0 \pm 6.8$ & 8/8 & $+0.2 \pm 13.9$ & 388 & 633 \\
\bottomrule
\end{tabular}
\end{table}

\begin{figure}[t]
\centering
\includegraphics[width=0.66\linewidth]{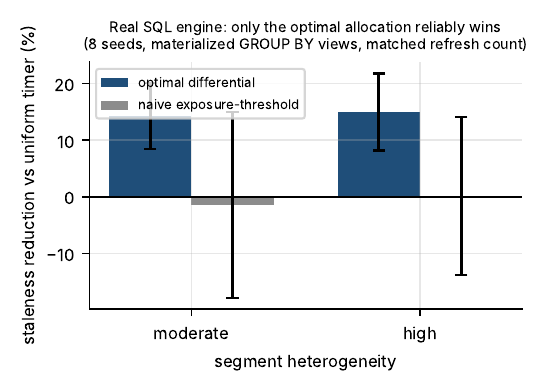}
\caption{Real relational engine (SQLite materialized \texttt{GROUP BY} views,
8 seeds, matched recompute count). The optimal differential policy reliably
reduces weighted staleness ($+14$--$15\%$, winning every seed); the naive
exposure threshold straddles zero. Error bars are $\pm$ s.d.}
\label{fig:realsql}
\end{figure}

\paragraph{A real-system nuance.} At matched recompute \emph{count}, the
differential policy spends more total SQL recompute time (Table~\ref{tab:realsys},
last two columns: $\approx\!590$--$630$\,ms vs.\ $\approx\!340$--$390$\,ms)
because it recomputes high-change partitions more often, and those views are
heavier to rebuild. The matched-count budget is the right model when recompute
cost is uniform across partitions; when it is not, the extension of
Section~\ref{sec:limits} (per-partition cost $c_j$, giving
$x_j^\star \propto \sqrt{w_j\lambda_j/c_j}$) re-budgets on CPU rather than count.
We report this explicitly because it is visible only on a real engine.

\section{Discussion}
\label{sec:discussion}
\paragraph{Why the square root, and the contrast with Cho--Garcia-Molina.}
The intuitive guess is to recompute each partition in proportion to its risk
$w_j\lambda_j$; the optimum scales with its \emph{square root}. The extra
concavity comes from the budget being a sum of \emph{rates} $1/T_j$ while cost is
(to first order) linear in the intervals $T_j$, so equalizing marginal cost per
unit rate yields the square-root rule---and the naive exposure threshold, whose
intervals scale as $1/(w_j\lambda_j)$, over-serves high-risk partitions and
loses. Cho and Garcia-Molina found that for a pure-freshness objective uniform
refresh is often optimal and non-uniformity can invert intuition~\cite{cho2000};
our weighted-exposure objective makes non-uniformity help in the intuitive
direction. Both results turn on the objective, and both identify homogeneity as
where uniform is optimal.

\paragraph{When differential refresh does not help.}
If the workload is homogeneous ($w_j\lambda_j$ nearly constant), the price of
uniformity~\eqref{eq:gap} is near zero. If recompute cost is not separable---a
monolithic full recompute---the budget~\eqref{eq:budget} does not apply and there
is nothing to allocate; by Proposition~\ref{prop:equiv} that is exactly the
single-age regime where a staleness score is already a timer. And if rate
estimates are uninformative the allocation reverts toward uniform, though the
noise study shows this degradation is slow.

\section{Limitations and Future Work}
\label{sec:limits}
The real-system study is single-node (one relational engine) and uses a
synthetic Poisson change stream; validation on a distributed/replicated cluster
and on production update traces is future work, as is extension to
\texttt{JOIN}-bearing views whose recompute cost depends on more than one base
table. The exposure model is a Poisson single-change abstraction; richer change
processes and non-exponential staleness-to-error couplings are future work,
though the allocation framework is agnostic to the exact $E_j$ as long as it is
convex and increasing in $T_j$. Heterogeneous per-partition recompute costs
$c_j$ are a direct extension: replacing $\sum 1/T_j$ by $\sum c_j/T_j$ rescales
the law to $x_j^\star \propto \sqrt{w_j\lambda_j/c_j}$, which, as
Section~\ref{sec:realsys} notes, is the right budget when recompute cost varies
across partitions. Online estimation of $\lambda_j$ under drift and joint
optimization of refresh with partition granularity are natural next steps.

\section{Conclusion}
Refreshing derived state---a materialized aggregate or a trained model---is a
scheduling problem, and a freshness budget is only as powerful as the action it
drives. Under a single global recompute, a weighted staleness score of one
shared age is a uniform timer in disguise (Proposition~\ref{prop:equiv}). The
leverage is a better action: recompute partitions differentially and allocate
the recompute budget by the square root of partition risk
(Proposition~\ref{prop:alloc}), which never costs more than the uniform timer
and saves a closed-form Cauchy--Schwarz dispersion that grows with heterogeneity
(Proposition~\ref{prop:dominance}). On a real relational engine with
materialized \texttt{GROUP BY} views, the policy cuts realized weighted staleness
by $14$--$15\%$ at matched recompute count, winning every seed, while a naive
exposure threshold does not. The design rule: a global staleness score is a
timer; buy the win with differential refresh, not a more intricate score.

\section*{Declarations}
\noindent\textbf{Funding.} The author received no funding for this work.

\noindent\textbf{Competing interests.} The author declares no competing
interests.

\noindent\textbf{Data and code availability.} The discrete-event simulator, the
real-system (SQLite) materialized-view harness, and all scripts that generate
every table and figure are provided as supplementary material; all reported
numbers are reproducible from that code with the random seeds stated in the text.

\section*{Declaration of generative AI and AI-assisted technologies in the manuscript preparation process}
\noindent During the preparation of this work, the author used Anthropic Claude
(Claude Opus 4.8, \url{https://www.anthropic.com}) to support the mathematical
derivations, to assist in implementing the simulation and real-system evaluation
code, and to draft and edit the text. The author designed the study, verified all
theoretical results and proofs, independently checked every reported number
against the code, reviewed and edited the output as needed, and takes full
responsibility for the content of the published article.

\end{document}